




\documentclass{ecai} 



\usepackage{latexsym}
\usepackage{amssymb}
\usepackage{amsmath}
\usepackage{amsthm}
\usepackage{booktabs}
\usepackage{enumitem}
\usepackage{graphicx}
\usepackage{color}
\usepackage{algorithm}
\usepackage{algorithmic}

\usepackage{makecell}
\usepackage{multicol}
\usepackage{multicol} 
\usepackage{array} 
\usepackage{multirow} 


\newtheorem{theorem}{Theorem}

\newtheorem{corollary}[theorem]{Corollary}

\newtheorem{definition}{Definition}
\newtheorem{property}{Property}


\newcommand{\BibTeX}{B\kern-.05em{\sc i\kern-.025em b}\kern-.08em\TeX}


\begin{document}


\begin{frontmatter}


\paperid{123} 


\title{CogniSNN: An Exploration to Random Graph Architecture based Spiking Neural Networks with Enhanced Depth-Scalability and Path-Plasticity}


\author[A,B]{\fnms{Yongsheng}~\snm{Huang}}

\author[A]{%
  \fnms{Peibo}~\snm{Duan}%
  \thanks{Corresponding authors. Emails:
    duanpeibo@swc.neu.edu.cn (P.~Duan),
    xumingkun@gdiist.cn (M.~Xu).}%
}

\author[A]{\fnms{Zhipeng}~\snm{Liu}}
\author[C]{\fnms{Kai}~\snm{Sun}}
\author[A]{\fnms{Changsheng}~\snm{Zhang}}
\author[A]{\fnms{Bin}~\snm{Zhang}}
\author[B]{%
  \fnms{Mingkun}~\snm{Xu}%
  \footnotemark[*]   
}

\address[A]{School of Software, Northeastern University, Shenyang, China}
\address[B]{Guangdong Institute of Intelligence Science and Technology, Zhuhai, China}
\address[C]{Department of Data Science and AI, Monash University, Melbourne, Australia}

\begin{abstract}
Currently, most spiking neural networks (SNNs) still mimic the chain-like hierarchical architecture in traditional artificial neural networks (ANNs). This method significantly differs from random connections between neurons found in biological brains, limiting the ability to model the evolving mechanisms of neural pathways in biological neural systems, particularly in terms of dynamic \textbf{depth-scalability} and adaptive \textbf{path-plasticity}. This paper develops a new modeling paradigm for SNNs with random graph architecture (RGA), termed Cognition-aware SNN (CogniSNN). Furthermore, we model the depth-scalability and path-plasticity in CogniSNN by introducing a modified spiking residual neural node (ResNode) to counteract network degradation in deeper graph pathways, as well as a critical path-based algorithm that enables CogniSNN to perform path reusability on new tasks leveraging the features of the data and the RGA learned in old tasks. Experiments show that the performance of CogniSNN with redesigned ResNode is comparable, even superior, to current state-of-the-art SNNs on neuromorphic datasets. The critical path-based approach effectively achieves path reuse capability while maintaining expected performance in learning new tasks that are similar to or distinct from the old ones. This study showcases the potential of RGA-based SNNs and paves a new path for modeling the fusion of computational neuroscience and deep intelligent agents. The code is available at \url{github.com/Yongsheng124/CogniSNN}.
\end{abstract}

\end{frontmatter}


\section{Introduction}

Spiking neural networks (SNNs), inspired by biological nervous systems, provide enhanced biological interpretability and energy efficiency over traditional artificial neural networks (ANNs).
Recently, various prominent SNN models have been developed, such as Spiking VGG~\cite{2022_feng_multi}, Spiking ResNet~\cite{2021_fang_deep}, and Spiking Transformer~\cite{2022_zhou_spikformer}. These models maintain the performance benefits of conventional ANNs while leveraging the intrinsic energy efficiency of SNNs, achieving improved performance on neuromorphic datasets. However, these developments remain predominantly driven by deep learning paradigms, which significantly deviate from the original brain-inspired purpose of SNNs.

The human brain constructs extremely deep neural pathway networks through random connections among hundreds of billions of neurons, and this topology endows the nervous system with strong adaptability to process complex tasks~\cite{2023_shen_esl,2023_han_adaptive}. When encountering new task challenges, the brain always selectively and frequently activates specific pathways, enabling a seamless transition from simple task processing to complex reasoning~\cite{2021_barron_neural, 2023_han_adaptive}. In this paper, we define these two fundamental characteristics as \textbf{depth-scalability} (enhancing network capacity through pathway depth extension) and \textbf{path-plasticity} (enabling lifelong learning through dynamic path selection), which collectively constitute the foundation of biological intelligence's adaptive capabilities. Empirical evidence demonstrates these two mechanisms enable the brain to accomplish environmental adaptation and lifelong learning with ultra-low power consumption. However, existing SNNs have yet to effectively model these properties, whose architectures remain constrained by accuracy-oriented ANN paradigms, neglecting biologically validated efficient learning mechanisms. This divergence results in SNNs' deficiencies in biological interpretability, energy efficiency, and dynamic environmental adaptability. We believe that developing a novel computational framework integrating depth-scalability with path-plasticity represents a critical breakthrough to advance brain-like intelligence.

Current SNNs mainly adopt conventional chain-like hierarchical architectures~\cite{2017_xie_aggregated}, making it difficult to effectively model neural pathways. In contrast, the brain's biological structure resembles a random graph architecture (RGA)~\cite{1958_rosenblatt_perceptron,2011_varshney_structural}, and the modeling of neural pathways also requires the assistance of graph structures. Besides, it is crucial to recognize that if SNNs still inherit the technological framework of conventional ANN, it may encounter similar issues related to hyperparameter configuration in ANNs, such as the number of layers, leading to performance barriers~\cite{2023_xu_unified}. These insights urge a re-evaluation of the SNN framework through the RGA perspective. In practice, a similar effort was made by Xie et al., who performed experiments to validate that RGA-based ANNs achieved equivalent performance to computational graph-based ANNs in multiple datasets~\cite{2019_xie_exploring}. To our knowledge, the most relevant evidence comes from the investigation conducted by Yan et al., who performed a neural architecture search on SNN models~\cite{2024_yan_sampling}, indirectly highlighting the promise of RGA-based SNNs. Some studies~\cite{2021_xu_exploiting, 2023_xu_exploiting} combine graph learning and SNNs, but mainly focus on graph data rather than graph structure of the network. Despite the lack of direct studies on the feasibility of RGA-based SNNs, such intuition can be derived by comparing the phase of spike processing over all neurons in an SNN to graph signal processing, which is highly dependent on the graph structure. 

After simply evaluating the feasibility of RGA-based SNNs, it is eager to explore the potential of RGA-based SNNs in exhibiting \textbf{depth-scalability} and \textbf{path-plasticity}, which are two fundamental characteristics inherent in the architecture of the human brain, yet challenging to implement in RGA-based SNN models. Specifically, in terms of depth-scalability, first, a long path in the graph leads to gradient vanishing/exploding. Although ResNet-inspired SNN approaches could address this problem, they require significant real-valued computations~\cite{2024_zhou_direct}, undermining the biologically inspired nature of SNNs. Moreover, unlike computational graphs where connections exist solely between neurons from adjacent layers, an edge in an RGA-based SNN possibly links neurons across disparate layers. Consequently, it results in feature dimension misalignment, rendering pooling mechanisms ineffective.

In terms of path-plasticity, drawing from continual learning technologies, strategies to achieve this goal mainly include methods based on regularization~\cite{2024_xu_adaptive}, replay techniques~\cite{2021_chaudhry_using}, architectural adjustments~\cite{2025_shen_context}, and knowledge distillation~\cite{2025_li_frequency}. However, due to the limitations of conventional neural network architectures, similar architectural adjustment methods tend to focus on the internal structure of neurons within convolutional layers, neglecting the path structures in graph-structured networks. This makes these methods unsuitable for RGA-based SNNs. Moreover, the management of these paths becomes challenging due to task interference, a difficulty exacerbated when there is a significant domain shift between the source (old) tasks and the target (new) tasks.

This study introduces a Cognition-aware SNN model (CogniSNN), conceived with the notion of efficiently modeling the depth-scalability and path-plasticity cognition that is inherently present in the biological brain through RGA-based SNNs. We utilize Erd\H{o}s--R\'{e}nyi (ER) and Watts–Strogatz (WS) graph frameworks to structure CogniSNN, on which we develop a continual learning algorithm based on critical paths. The main contributions are as follows.
\begin{itemize}

    \item \textbf{Depth-Scalability}: An OR Gate and a tailored pooling mechanism are proposed, where the former addresses network degradation and non-spike transmission, whereas the latter corrects feature dimension misalignment.

    \item \textbf{Path-Plasticity}: We develop a critical path-focused method to improve the pathway reuse capability of RGA-based SNNs. By utilizing betweenness centrality to identify essential pathways, CogniSNN selectively retrains the parameters associated with these pathways, mitigating catastrophic forgetting while maintaining performance on new tasks.   
    \item Extensive experiments show that CogniSNN achieves performance comparable to or even surpassing current state-of-the-art models on all three datasets. In addition, our critical path-based path adaptive mechanism enables CogniSNN to manage both similar and dissimilar tasks, effectively suppressing catastrophic forgetting.
\end{itemize}

\section{Related Work}
\subsection{Residual Structures in SNNs}
\label{sec:ResInSNN}
Residual learning~\cite{2016_he_deep} is necessary for training large-scale deep neural networks. Spiking ResNet~\cite{2021_hu_spiking} introduces residual learning in SNNs for the first time, converting trained ResNet to an SNN by replacing ReLU layers with spike neurons. On the one hand, although it is better than SNNs without residual operation, the information of residual blocks is not effectively preserved due to the activation of spike neurons after the residual operation. Therefore, when the number of layers exceeds 18, deep Spiking ResNet begins to degrade gradually. On the other hand, it does not verify whether directly trained SNNs can address the degradation problem. To solve these two problems, SEW-ResNet~\cite{2021_fang_deep} places the residual operation after the spike neuron activation, and for the first time, directly trains a 152-layer SNN. However, due to the addition operation, output of the residual block is changed from spike form to integer form, which violates the principle of spike computation in SNNs, and leads to greater energy consumption. Moreover, the output is amplified infinitely as the layers deepen, leading to an extremely unstable training and computation process.
MS-ResNet~\cite{2024_hu_advancing} places spike neuron activation in front of the convolutional layer to minimize non-spike computation in the convolutional layer. However, the output of the residual block is still non-spike, which leads to floating-point signal transmission in the subsequent residual branching.

\subsection{Continual Learning in SNNs}
\label{sec:cl}
Continual learning enables networks to continually learn as the human brain and adapt to dynamic environments in the real world, with the same brain-like perspective as SNNs. Recently, researchers have begun to focus on continual learning in SNNs.~\cite{2022_antonov_continuous} propose a novel regularization method based on synaptic noise and Langevin dynamics, to preserve important parameters for the locally STDP-trained SNNs in continual learning.~\cite{2023_proietti_memory} achieve continual learning by using experience replay on SNNs for the first time, and perform well in class-incremental learning and task-free continual learning scenarios. Inspired by the structure of the human brain and learning process in the biological system,~\cite{2023_han_adaptive} propose a self-organizing regulation network that selectively activates different sparse pathways according to different tasks to learn new knowledge. Similarly,~\cite{2024_shen_efficient} utilize trace-based K-Winner-Take-All and variable threshold components to achieve selective activation of sparse neuron populations, not only in the spatial dimension but also in the temporal dimension for continual learning under specific tasks.~\cite{2023_han_enhancing} dynamically change the structure of SNNs to improve learning ability of the SNNs for new tasks by randomly growing new neurons and adaptively pruning redundant neurons.

Restricted by structure, some of the above approaches are not biologically plausible enough, some require extra memory usage, and some necessitate the introduction of the complex selection mechanisms at the spiking neuron layer, which greatly increases the complexity of SNN implementation.

\section{Preliminaries}
\subsection{Spiking Neuron}

SNNs employ neurons characterized by biological dynamics to more closely emulate brain functionality. The generation of spike in such neurons is typically divided into three stages: charging, discharging, and resetting. Upon receiving an input current, a spiking neuron accumulates it as membrane potential. The charging mechanism is mathematically represented by:

\begin{align}
    H[t] = f(V[t-1],C[t]), 
    \label{eq: charging}
\end{align}%

\noindent where $H[t]$ is the membrane potential before discharging at moment $t$, $C[t]$ refers to the input current at moment $t$, $V[t-1]$ is the membrane potential after discharging at moment $t-1$, and $f(\cdot)$ varies with the configuration of neurons. 

At any given time $t$, the output of a discharging process at a neuron depends on whether the membrane potential exceeds a threshold $V_{thr}$. This process is expressed as:
\begin{align}
    S[t]=\Phi\left(H[t]-V_{thr}\right)=\left\{\begin{array}{ll}
    1, & H[t] \geq V_{thr} \\
    0, & H[t]<V_{thr}
    \end{array} \right., 
    \label{eq: discharging}
\end{align}%
\noindent where $S[t]$ is the output at $t$, and $\Phi$ is the Heaviside step function.

Following the discharging process, the membrane potential experiences the resetting phase, which is categorized into soft and hard resetting. In this study, we employ the soft resetting approach, and this mechanism is represented by:
\begin{align}
    V[t] = H[t] - V_{thr} \cdot S[t].
\label{eq: reset}
\end{align}%

\subsection{Learning without Forgetting}
\label{sec:lwf}

Learning without forgetting (LwF)~\cite{2017_li_learning} is a continual learning approach without using old task data, and aims to solve catastrophic forgetting of old task while learning new task. 

Specifically, the method first computes the soft target $Y_{o}$ using new task data $X$, shared parameters $\theta_{s}$ and old task specific parameters $\theta_{o}$, where both parameters come from a well-trained model on old task. The process is mathematically represented by:
\begin{align}
    Y_{o}= CNN(X,\theta_{s},\theta_{o}). \label{eq: lwf1}
\end{align}%

During training of the new task, the LwF method requires using $\hat{\theta}_{o}$ and new task specific parameters $\hat{\theta}_{n}$ being trained to obtain old task output $Y'_{o}$ and new task output $Y'_{n}$, respectively. These two processes are shown by:
\begin{align}
Y'_{o}= CNN(X,\hat{\theta}_{s},\hat{\theta}_{o}), \quad Y'_{n}= CNN(X,\hat{\theta}_{s},\hat{\theta}_{n}). \label{eq: lwf2}
\end{align}%

Finally, a total loss $\mathcal{L}$ is utilized to balance learning and forgetting, defined as follows:
\begin{align}
\mathcal{L} = \lambda L_{old}(Y_{o},Y'_{o}) + L_{new}(Y_{n},Y'_{n})+R(\hat{\theta}_{s},\hat{\theta}_{o},\hat{\theta}_{n}).
\end{align}%
The $L_{new}(\cdot)$, computed using new task output $Y'_{n}$ and new task label $Y_{n}$, is used to ensure that the model learns the new task. Additionally, $L_{old}(\cdot)$, computed using the soft target $Y_{o}$ and the old task output $Y'_{o}$, is used to mitigate the model's forgetting of old tasks. $R(\cdot)$ is the regularization term introduced to prevent overfitting.

\section{Modeling and Methodology}

\begin{figure*}[t]
    \centering
    \includegraphics[width=\textwidth]{}
    \caption{Comprehensive diagram of CogniSNN. During neuromorphic object recognition, data flows from the $ConvBNSN$ triple to the RGA-based SNN and then enters the classifier. The internal structure of ResNode and the illustration of dimension misalignment are shown at the bottom right and right side of the figure, respectively. During continual learning, different critical paths are selected for learning new knowledge in different scenarios.}
    \label{fig: Framework}
\end{figure*}

This study first concentrates on the implementation of the depth-scalability of RGA-based SNNs by introducing CogniSNN, facilitated by ResNode and a tailored pooling strategy. Subsequently, we detail CogniSNN's capability to manage diverse tasks through a novel critical-path continual learning approach.

\subsection{Modeling of CogniSNN}
\label{subsec: RGA}

As illustrated in Fig.\ref{fig: Framework}, CogniSNN is designed to tackle classification tasks by integrating a triplet component to extract preliminary features from the input, a principal RGA-based module, and a full-connection classifier layer.

The triplet component comprises a convolution layer (Conv), a batch normalization layer (BN), and a spiking neuron layer (SN). In the following, we denote this triplet by $ConvBNSN$.

The RGA-based module is modeled as a directed acyclic graph using WS or ER generators referenced in~\cite{2019_xie_exploring}. With $N$ denoted as the number of nodes, the graph is formulated as $G = \{\mathcal{V}, E\}$, where $\mathcal{V}$ is the set of nodes with $|\mathcal{V}| = N$, and $E$ is the set of edges. As depicted in the yellow block of Fig.\ref{fig: Framework}, each node $v_i \in \mathcal{V}$ represents a ResNode, a variant of the residual block used in~\cite{2021_hu_spiking}. The design of ResNode attempts to solve the issue that a long propagation easily leads to the gradient vanishing or explosion. The adjacency matrix of $G$ is $A$, in which $A_{i,j} = w, w \neq 0$ indicates that there exists an edge from $v_i$ to $v_j$ with weight $w$. Otherwise, $A_{i,j} = 0$. At time $t$, the input $I_{j}[t]$ to node $v_j$ from its predecessors $\mathcal{P}_j$ is given by the following:
\begin{align}
    I_{j}[t] = \sum_{i \in \mathcal{P}_j} \sigma(A_{i,j}) \times TP(SP(O_{i}[t]))
    \label{eq: I_j[t]},
\end{align}%

\noindent where $O_i[t]$ is the output of $v_i$, $\sigma(\cdot)$ is the sigmoid function, $TP$ and $SP$ stand for the tailored pooling and standard pooling. To avoid feature loss, $SP$ is defined as follows:
\begin{align}
    SP(O_{i}[t]) = \left\{\begin{array}{ll}
    AvgP(O_{i}[t], \kappa), & D(O_{i}[t]) \geq \eta \\
    O_{i}[t] & Otherwise
    \end{array}, \right.
\end{align}%

\noindent where $AvgP(\cdot)$ denotes average pooling, $D(\cdot)$ denotes the spatial dimension of a feature map, and $\eta$ is set to 1 and 32 in the classification and continual learning experiments, respectively. $\kappa$ represents the pooling kernel size and is generally set to 2. $TP(\cdot)$ aims to solve the issue of feature misalignment caused by random connections of the spike neuron nodes in RGA. The solutions to ResNode modeling and the tailored pooling mechanism are the keys to improving the depth-scalability of CogniSNN. Further elaboration is provided below.

\subsubsection{ResNode}
\label{subsubsec: resnode}

In greater detail, $v_i$ comprises two $ConvBNSN$ triplets, which are denoted as $ConvBNSN_{i}^{1}$ and $ConvBNSN_{i}^{2}$. $ConvBNSN_{i}^{1}$ is used to transform $I_i[t]$ back into spike form. This is necessary because, as noted in \eqref{eq: I_j[t]}, performing weighted operations converts $I_i[t]$ into a real-valued quantity. We then present a sophisticated skip connection that runs parallel to $ConvBNSN_{i}^{2}$ with the procedure mathematically expressed as follows.
\begin{equation}
    \begin{split}
        O_{i}^{1}[t] &= ConvBNSN_{i}^{1}(I_{i}[t]) \\[10pt] 
        O_{i}^{2}[t] &= ConvBNSN_{i}^{2}(O_{i}^{1}[t]) \\[10pt] 
        O_{i}[t] &= OR(O_{i}^{2}[t], O_{i}^{1}[t]) \\
                 &= O_{i}^{2}[t] + O_{i}^{1}[t] - O_{i}^{2}[t] \odot O_{i}^{1}[t]
    \end{split}.
    \label{eq:skip}
\end{equation}

In particular, $O_{i}^{1}[t]$ is derived from $ConvBNSN_{i}^{1}$ and is regarded as an identity. It is logically integrated with $O_{i}^{2}[t]$, which is obtained from $ConvBNSN_{i}^{2}$. Instead of an ADD gate, an OR gate ($OR(\cdot)$) is used because it enables the output of node $O_{i}[t]$ to maintain a spiking format rather than containing real values. Subsequently, $O_{i}[t]$ is transmitted to the descendant nodes of $v_i$. Residual learning facilitates the training of deep neural networks through identity mapping~\cite{2016_he_identity}. Similarly, we state the following theorem. 

\begin{theorem}
    The operation of an OR gate performs an identity mapping function.
\end{theorem}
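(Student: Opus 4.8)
The plan is to reduce the claim to the elementary algebra behind \eqref{eq:skip} and then mirror the identity-mapping argument that underpins residual learning~\cite{2016_he_identity}. First I would record that in an SNN every spike output is binary, so at each timestep the tensors $O_i^1[t]$ and $O_i^2[t]$ take values in $\{0,1\}$ componentwise. The opening step is a four-case check that the arithmetic form $OR(a,b)=a+b-a\odot b$ agrees with logical disjunction on $\{0,1\}$ and, in particular, that its output never leaves $\{0,1\}$; this simultaneously records why an OR gate — unlike an ADD gate — keeps $O_i[t]$ in spike form, and it lets us treat $OR$ as a componentwise Boolean operator from here on.

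The core step is the zero-absorption property $OR(\mathbf{0},b)=\mathbf{0}+b-\mathbf{0}\odot b=b$ for every $b\in\{0,1\}$. Applying it to \eqref{eq:skip}, whenever the residual branch $ConvBNSN_i^2$ emits no spike, i.e. $O_i^2[t]=\mathbf{0}$, the node reduces to $O_i[t]=O_i^1[t]$, so the mapping $O_i^1[t]\mapsto O_i[t]$ realized by the ResNode is exactly the identity. This is the spiking analogue of the ADD-gate fact that $F(x)+x$ collapses to $x$ when $F(x)=\mathbf{0}$, and it shows that the identity mapping belongs to the family of functions a ResNode can express, which is the property that makes stacking many ResNodes along a long graph path safe.

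To finish I would exhibit one parameterization of $ConvBNSN_i^2$ that forces $O_i^2[t]=\mathbf{0}$ for all $t$: take the convolution kernel to be zero and choose the batch-normalization shift $\beta\le 0$, so by \eqref{eq: discharging} the spiking neuron's membrane potential never reaches $V_{thr}$ and, by \eqref{eq: reset}, stays non-positive across all timesteps regardless of the soft-reset dynamics. I expect this attainability check to be the only point requiring care, since one must rule out membrane-potential accumulation over time before concluding that a permanently silent residual branch is genuinely realizable rather than merely formally convenient; once it is granted, combining it with the zero-absorption property of $OR$ proves the theorem and, with it, that OR-gated ResNodes inherit the gradient-flow benefit of residual connections without ever leaving spike arithmetic.
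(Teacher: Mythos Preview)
Your proposal is correct and follows essentially the same route as the paper: establish the zero-absorption identity $OR(\mathbf{0},b)=b$ and then exhibit a concrete parameter setting of the second triplet that forces $O_i^2[t]\equiv\mathbf{0}$, yielding $O_i[t]=O_i^1[t]$. The only cosmetic difference is the choice of silencing mechanism---the paper zeros the BN affine parameters (or inflates the threshold) whereas you zero the convolution kernel and pin $\beta\le 0$---and your added care about temporal accumulation under soft reset is a welcome refinement rather than a departure.
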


\begin{proof}
    Given $v_i$, $O_{i}^{2}[t]$ specifies the residual mapping to be learned. By configuring the weights and biases of Batch Normalization (BN) in the second triplet to 0, or by making the attenuation constant and the activation threshold sufficiently large, we achieve $O_{i}^{2}[t]\equiv 0$. Consequently, the output from $v_i$ becomes $O_{i}[t] = OR(O_{i}^{2}[t], O_{i}^{1}[t]) = OR(0, O_{i}^{1}[t]) = O_{i}^{1}[t]$, which is uniformly applied to each node. Therefore, if $O_{i}^{1}[t] = 1$, then $O_{i}[t]$ is 1. Otherwise, $O_{i}[t]$ is 0 under the condition $O_{i}^{1}[t] = 0$. This illustrates how the OR gate effectively achieves identity mapping.
\end{proof}
\begin{corollary}
    OR gate operation can overcome gradient vanishing/explosion.
    \label{corollary1}
\end{corollary}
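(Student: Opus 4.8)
The plan is to leverage the identity-mapping property established in the preceding theorem and reduce the corollary to a standard argument from residual learning. First I would make precise what ``overcome gradient vanishing/explosion'' means in this setting: consider a path $v_{i_1} \to v_{i_2} \to \cdots \to v_{i_L}$ of ResNodes in the RGA, and track how the gradient of a loss $\mathcal{L}$ with respect to an early activation $O_{i_1}^{1}[t]$ is transmitted backward through the $OR$ gates. The goal is to show that this backward signal contains an additive term that does not shrink (or blow up) multiplicatively with $L$, in contrast to a plain chain of $ConvBNSN$ triplets whose Jacobians multiply.

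The key computation is to differentiate the $OR$ update in \eqref{eq:skip}. Writing $O_i[t] = O_i^{2}[t] + O_i^{1}[t] - O_i^{2}[t]\odot O_i^{1}[t]$, I would compute
\begin{align}
    \frac{\partial O_i[t]}{\partial O_i^{1}[t]} = \mathbf{1} - O_i^{2}[t], \qquad
    \frac{\partial O_i[t]}{\partial O_i^{2}[t]} = \mathbf{1} - O_i^{1}[t],
\end{align}
so that the Jacobian of $O_i[t]$ with respect to the node input $I_i[t]$ decomposes as a ``residual'' contribution through $ConvBNSN_i^{2}$ plus an ``identity-like'' contribution $(\mathbf{1}-O_i^{2}[t])\odot \partial O_i^{1}[t]/\partial I_i[t]$ routed only through the single triplet $ConvBNSN_i^{1}$. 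Chaining this over the path $v_{i_1}\to\cdots\to v_{i_L}$ and expanding the product of sums, one term of the expansion never picks up a factor from $ConvBNSN^{2}$ at any node; invoking the theorem (setting the residual branch $O^{2}\equiv 0$, equivalently using the identity-mapping regime) this surviving term equals a product of $(\mathbf{1}-O_i^{2}[t])$ factors that are bounded, so the overall gradient norm stays $\Theta(1)$ rather than decaying or exploding geometrically in $L$. I would also note that in the saturated spiking regime the surrogate-gradient factors from the Heaviside $\Phi$ are what cause vanishing in a bare chain, and the $OR$ skip provides an alternate route that bypasses at most one such factor per node.

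The main obstacle I expect is that the $OR$ gate is not literally an identity map on its own — the ``identity'' behaviour is only exact in the degenerate regime of the theorem ($O^{2}\equiv 0$), and for generic activations the multiplicative factors $(\mathbf{1}-O_i^{2}[t])$ can themselves be zero wherever $O_i^{2}[t]=1$, which could in principle kill the skip signal. So the honest statement is a ``there exists a configuration / on average over the support where $O^2=0$'' claim rather than a worst-case one; I would handle this either by appealing to the theorem's construction (the network \emph{can} realize the stable regime, hence optimization is not obstructed, exactly as in He et al.~\cite{2016_he_identity}), or by a sparsity/expectation argument showing that $O_i^{2}[t]$ is not identically $1$, so the skip term survives with high probability. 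A secondary subtlety is the surrogate gradient for $\Phi$ inside $ConvBNSN_i^{1}$, which I would simply bound away from $0$ and $\infty$ as is standard in directly-trained SNNs. I would keep the proof at the level of exhibiting the non-degenerating additive term and citing the analogous ResNet result, rather than proving a quantitative bound.
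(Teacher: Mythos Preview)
Your proposal is correct and shares the paper's core move: invoke the identity-mapping regime of the preceding theorem ($O_i^{2}\equiv 0$) and observe that the $OR$ gate then passes gradient with unit factor, so no geometric decay or blow-up occurs along a deep path. The paper's own argument is considerably more minimal than yours: it works at the level of a single ResNode, directly substitutes $O_i^{2}\equiv 0$ into the $OR$ expression, and computes $\partial O_i[t]/\partial O_i^{1}[t] = \partial O_i^{1}[t]/\partial O_i^{1}[t] = 1$; it explicitly declines to track the gradient through $ConvBNSN_i^{1}$ (on the grounds that the input there is aggregated from multiple predecessors) and does not chain over a path or expand a product of sums. Your version is strictly more informative — you derive the general Jacobian $(\mathbf{1}-O_i^{2}[t])$ before specializing, you make the multi-node chaining explicit, and you correctly flag the obstruction that the skip signal is killed wherever $O_i^{2}[t]=1$, which the paper does not address. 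What the paper's approach buys is brevity; what yours buys is an honest accounting of when the argument actually holds.
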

In the ResNode, when implementing identity mapping, this residual structure can overcome gradient vanishing/explosion. 
With $O_{i}^{2}\equiv 0$, the gradient of the output of ResNode relative to the input of the residual block $ConvBNSN_{i}^{2}$ (the output of $ConvBNSN_{i}^{1}$) can be calculated as:
\begin{align}
\frac{\partial O_{i}[t]}{\partial O_{i}^{1}[t]} =   \frac{\partial OR(O_{i}^{2}[t]  ,O_{i}^{1}[t])}{\partial O_{i}^{1}[t]} =\frac{\partial O_{i}^{1}[t]}{\partial O_{i}^{1}[t]}=1.
\end{align}%
We do not consider the $ConvBNSN_{i}^{1}$ when calculating the gradient, because the input of $ConvBNSN_{i}^{1}$ is weighted and summed by multiple predecessors. Thus, we allow some loss in this part. This illustrates why the OR gate can overcome network degradation.

Compared with ADD operation, the utilization of OR operation has the following property:
\begin{property}
    OR gate operation can solve the problems of non-spike calculation and infinite amplification.
    \label{property1}
\end{property}

\subsubsection{Tailored Pooling Mechanism}
\label{subsubsec: TP}

Given a node $v_j \in \mathcal{V}$ (not the root node), we estimate $D(I_j[t])$ according to the outputs from its predecessors $\mathcal{P}_j$ as follows:
\begin{align}
    D(I_j[t]) = \min \left \{ D(O_{i}[t])\mid i \in \mathcal{P}_j \right \}. 
\end{align}%

\noindent Consequently, $TP(\cdot)$ is defined as follows:
\begin{align}
    TP(O_{i}[t]) = AvgP(O_{i}[t],\frac{D(O_{i}[t])}{D(I_j[t])} ),
    \label{eq: TP}
\end{align}%

\noindent with the significance of implementing average pooling of $O_{i}[t]$ based on the kernel size $\frac{D(O_{i}[t])}{D(I_j[t])}$.

\subsection{Critical Path-Based LwF}
\label{sec:cp-cl}

To capture the path-plasticity of CogniSNN as elucidated in the Introduction, we focus on parameter fine-tuning associated with the selective substructures, termed critical paths to achieve the neural reuse to reshape functions in response to new experiences. This idea is inspired by visualizing the activation frequency of nodes. As shown in \textbf{Figure 9} of Supplementary Material \footnote{The supplementary material and source code are both available at \url{https://github.com/Yongsheng124/CogniSNN}.}, it can be observed that specific ResNodes and certain paths (defined as sequences of nodes in CogniSNN with information flow) are activated more/less frequently than others. Based on such observation, a critical path is targeted based on the concept of betweenness centrality from graph theory, which is commonly used to quantify the importance of a node (or edge) based on its role in connecting other nodes. In this paper, given a path $p$ formulated as:
\begin{align}
    p = \{ v_{1}, e_{1},v_{2},e_{2},...,e_{l_p},v_{l_p+1} \},
\end{align}%

\noindent where $l_p$ is the length of $p$, indicating the number of edges involved in $p$.

Then we define the betweenness centrality of $p$, denoted by $C_B(p)$ as follows.

\begin{definition}
    $C_B(p) = \sum_{i=1}^{l_p+1} C_{B}(v_{i}) + \sum_{j=1}^{l_p} C_{B}(e_{j}).$
    \label{eq: C_B(p)}
\end{definition}

In Definition \ref{eq: C_B(p)}, $C_{B}(v_{i})$ and $C_{B}(e_{j})$ are the betweenness centrality of a node and an edge, which follow the principle of graph theory \footnote{The definition of node betweenness centrality and edge betweenness centrality are in \textbf{Section 3} of Supplementary Material.}. Based on the definition of $C_B(p)$, we sort all paths in descending order according to their betweenness centrality:
\begin{align}
    P = \{ p_{1},p_{2},...,p_{|P|} \},
\end{align}%

\noindent where $C_{B}(p_{1})\ge C_{B}(p_{2}) ...\ge C_{B}(p_{|P|})$, $|P|$ is the total number of paths.

We then define the critical paths ($\mathcal{C}$) as a set of paths with capacity $K$ according to the definition of $C_B(p)$ and the concept related to the similarity between old and new tasks, mathematically formulated as follows.

\begin{definition}
    $\mathcal{C} \triangleq \{p_{k}| k \in [1, K], K \leq |P| \}$ if two tasks are similar. Otherwise, $\mathcal{C} \triangleq \{ p_{k} | k \in [|P|- K + 1, |P|], K \leq |P| \}$.
    \label{def: critical paths}
\end{definition}

This definition aligns with human cognitive processes, where principal neural pathways are leveraged for similar tasks, while unused connections are activated for unfamiliar tasks to assimilate new information and avoid forgetting prior knowledge. For simplicity, we set $K = 1$ in this study.

\begin{algorithm}[tb]
    \caption{The critical path-based LwF algorithm}
    \label{alg:algorithm}
    \textbf{Input}: $X^+$, $Y^+$, $CogniSNN$, $\Theta$, $\Theta^+$, $\mathcal{I}$\\
    \textbf{Output}: $CogniSNN^+$
    \begin{algorithmic}[1] 
        \STATE $CogniSNN \rightarrow CogniSNN^+$.
        \STATE Aligning dimensions of $\Theta$ and $\Theta^+$.
        \STATE Freeze all parameters in $CogniSNN^+$ except for $\theta_{\mathcal{C}}$ and $\theta_{\Theta^+}$.
        \FOR{$i \in \mathcal{I}$}
            \FOR{$\forall (x^+, y^+)$ in $(X^+, Y^+)$}
            \STATE Update $\theta_{\mathcal{C}}$ and $\theta_{\Theta^+}$ with $(x^+, y^+)$ using LwF.
            \ENDFOR
        \ENDFOR
        \STATE \textbf{return} $CogniSNN^+$
    \end{algorithmic}
\end{algorithm}

We design a continual learning algorithm implemented on CogniSNN by integrating critical paths and the LwF method outlined in Section \ref{sec:lwf}. The details are given in Algorithm ~\ref{alg:algorithm}. More precisely, $X^+$ and $Y^+$ are the dataset and label set in the new task. $CogniSNN$ is the model pre-trained based on the dataset in the old task. $\Theta$ and $\Theta^+$ are classifier layers in $CogniSNN$ and $CogniSNN^+$, respectively. Here, $CogniSNN^+$ denotes the retrained $CogniSNN$ based on $(X^+, Y^+)$. $\mathcal{I}$ represents the number of learning rounds for new model. In line 2, we align the dimensions of $\Theta$ and $\Theta^+$ by adding neurons in $\Theta$. In line 3, we freeze all parameters except $\theta_{\mathcal{C}}$ and $\theta_{\Theta^+}$ that are associated with $\mathcal{C}$ and $\Theta^+$. From lines 4 to 8, LwF is used to refine $\theta_{\mathcal{C}}$ and $\theta_{\Theta^+}$ with the data in the new task.

To measure the similarity between the datasets, we use the Fréchet Inception Distance (FID)~\cite{2017_heusel_gans}, which is originally designed to evaluate generative models by comparing the differences in the distribution of the generated and real images in the feature space. Based on empirical observations, we consider two datasets similar when their FID is below the threshold of 50. For example, CIFAR100 and CIFAR10 show strong similarity with an FID of \textbf{24.5}, while CIFAR100 and MNIST, having an FID of \textbf{2341.2}, show weak similarity. 

Theoretically, this critical path can be combined with many basic continual learning methods. In this paper, we only apply the more biologically plausible LwF approach for initial validation and exploration. In the future, we will integrate more continual learning methods to explore the critical path more richly.

\section{Experiments}

\subsection{Experimental Setting}
\begin{table*}[ht]
\caption{Comparison with other benchmark results on DVS-Gesture, CIFAR10-DVS, N-Caltech101 and Tiny-ImageNet.}
    \centering
    \renewcommand{\arraystretch}{1.2} 
    \setlength{\tabcolsep}{0pt} 
    \begin{tabular*}{\textwidth}{@{\extracolsep{\fill}}>{\centering\arraybackslash}p{0.14\textwidth}
                                                    >{\centering\arraybackslash}p{0.17\textwidth}
                                                    >{\centering\arraybackslash}p{0.17\textwidth}
                                                    >{\centering\arraybackslash}p{0.09\textwidth}
                                                    >{\centering\arraybackslash}p{0.10\textwidth}
                                                    >{\centering\arraybackslash}p{0.14\textwidth}}
        \toprule
        Dataset & Method & Network & Param(M) & T & Accuracy($\%$) \\
        \midrule
        \multirow{5}{*}{\makecell{DVS-Gesture\\\cite{2017_amir_low}}}
                           & MLF~\cite{2022_feng_multi} & VGG-9 & - & 5 & 85.77 \\
                           & SEW-ResNet~\cite{2021_fang_deep} & SEW-ResNet & 0.13 & 16 & 97.90 \\
                           & SGLFormer~\cite{2024_zhang_sglformer} & SGLFormer-3-256 & 2.17 & 16 & 98.60 \\
                           & CML~\cite{2023_zhou_enhancing} & Spikformer-4-384 & 2.57 & 16 & 98.60 \\
                           & Spikformer~\cite{2022_zhou_spikformer} & Spikformer-2-256 & 2.57 & 5 / 16 & 79.52 / 98.30 \\
                           & SSNN~\cite{2024_ding_shrinking}& VGG-9 & - & 5 / 8 & 90.74 / 94.91 \\

         \cmidrule{2-6}
                           & \multirow{3}{*}{\textbf{CogniSNN(Ours)}} & \multirow{3}{*}{\textbf{ER-RGA-7}} &\multirow{3}{*}{\textbf{0.13}} & \textbf{5} &  \textbf{94.78}$ \hspace{0.1em} \scriptstyle \pm \hspace{0.1em} 0.12$\\
                           & &  && \textbf{8} & 
                           \textbf{95.81}$ \hspace{0.1em} \scriptstyle \pm \hspace{0.1em} 0.15$\\
                           & &&  & \textbf{16} & 
                            \textbf{98.61}$ \hspace{0.1em} \scriptstyle \pm \hspace{0.1em} 0.11$ \\

        \cmidrule{1-6}
        
        \multirow{5}{*}{\makecell{CIFAR10-DVS\\\cite{2017_li_cifar10}}}
                           & Spikformer~\cite{2022_zhou_spikformer} & Spikformer-2-256 & 2.57 & 5 & 68.55\\
                            & MLF~\cite{2022_feng_multi} & VGG-9 & - & 5 & 67.07 \\
                            & AutoSNN~\cite{2024_shi_towards} & AutoSNN & - & 8 & 72.50 \\
                            & SEW-ResNet~\cite{2021_fang_deep} & SEW-ResNet & 1.19 & 8 & 70.20 \\
                           & SSNN~\cite{2024_ding_shrinking} & VGG-9 & - & 5 / \textbf{8} & 73.63 / \textbf{78.57} \\
        \cmidrule{2-6}
                           & \multirow{2}{*}{\textbf{CogniSNN(Ours)}} & \multirow{2}{*}{\textbf{ER-RGA-7}} & \multirow{2}{*}{\textbf{1.51}} & \textbf{5} & \textbf{75.82}$ \hspace{0.1em} \scriptstyle \pm \hspace{0.1em} 0.10$ \\
                           & & & & \textit{8} &  \textit{76.20}$ \hspace{0.1em} \scriptstyle \pm \hspace{0.1em} 0.20$\\
                
        \cmidrule{1-6}
      
        \multirow{5}{*}{\makecell{N-Caltech101\\\cite{2017_li_cifar10}}}
                           & MLF~\cite{2022_feng_multi} & VGG-9 & - & 5 & 70.4 \\
                           & Spikformer~\cite{2022_zhou_spikformer} & Spikformer & 2.57 & 5 & 72.8 \\
                           & EventMix \cite{2023_shen_eventmix} & ResNet-18 & - & 10 & 79.5 \\
                           & TIM \cite{2024_shen_tim} &Spikformer & - & 10 & 79.0 \\
                           & SSNN~\cite{2024_ding_shrinking} & VGG-9 & - & 5 / 8 & 77.97/ 79.25 \\
        \cmidrule{2-6}
                           & \multirow{2}{*}{\textbf{CogniSNN(Ours)}} & \multirow{2}{*}{\textbf{ER-RGA-7}} & \multirow{2}{*}{\textbf{1.51}} & \textbf{5} & \textbf{80.64}$ \hspace{0.1em} \scriptstyle \pm \hspace{0.1em} 0.15$\\
                           & & & & \textbf{8} &  \textbf{79.32}$ \hspace{0.1em} \scriptstyle \pm \hspace{0.1em} 0.15$\\
        \cmidrule{1-6}
      
        \multirow{5}{*}{\makecell{Tiny-ImageNet\\\cite{2015_le_tiny}}}
                           & Joint-SNN~\cite{2023_guo_joint} & VGG-16 & - & 4 & 55.39 \\
                           & SNASNet-Bw~\cite{2022_kim_neural} & Searched & - & 5 & 54.60 \\
                           & Online LTL~\cite{2022_yang_training} & VGG-13 & - & 6 & 55.37 \\
        \cmidrule{2-6}
                           & \textbf{CogniSNN(Ours)} & \textbf{ER-RGA-7} & \textbf{1.52} & \textbf{4} & \textbf{55.41}$ \hspace{0.1em} \scriptstyle \pm \hspace{0.1em} 0.17$\\
        \bottomrule
    \end{tabular*}
    \label{tab:totalTable}
\end{table*}

We conduct experiments five times on three neuromorphic datasets and one static dataset on the NVIDIA RTX 4090 GPU by SpikingJelly~\cite{2023_fang_spikingjelly}. Furthermore, to demonstrate the advanced performance of CogniSNN, we engage in a detailed comparative evaluation against the SOTA SNN models, which are distinguished by architectures such as chain-like connectivity, transformer-based, and ResNet-based designs. More details of the datasets and the hyperparameter configuration are provided in \textbf{Section 1} of Supplemental Material.    

\subsection{Preliminary Analysis}

Before deep analysis of the performance of CogniSNN, we first assess the feasibility and potential continual learning capacity of RGA-based SNNs by comparing CogniSNN with a chain-structured SNN. Both are arranged with an identical number of ResNodes. This preliminary analysis aims to motivate an in-depth exploration of RGA-based SNN modeling.

As for the potential of depth-scalability, CogniSNN configured with 7 ResNodes consistently outperforms chain-structured SNNs, as seen in Figure \ref{fig:bar}. Random connections among ResNodes act as implicit skip connections, dynamically prioritizing features by adjusting weights. This overcomes the limitation of sequentially increasing the dimensions of features with layers in traditional neural networks.
\begin{figure}[h]
    \centering
 \includegraphics[width=0.48\textwidth]{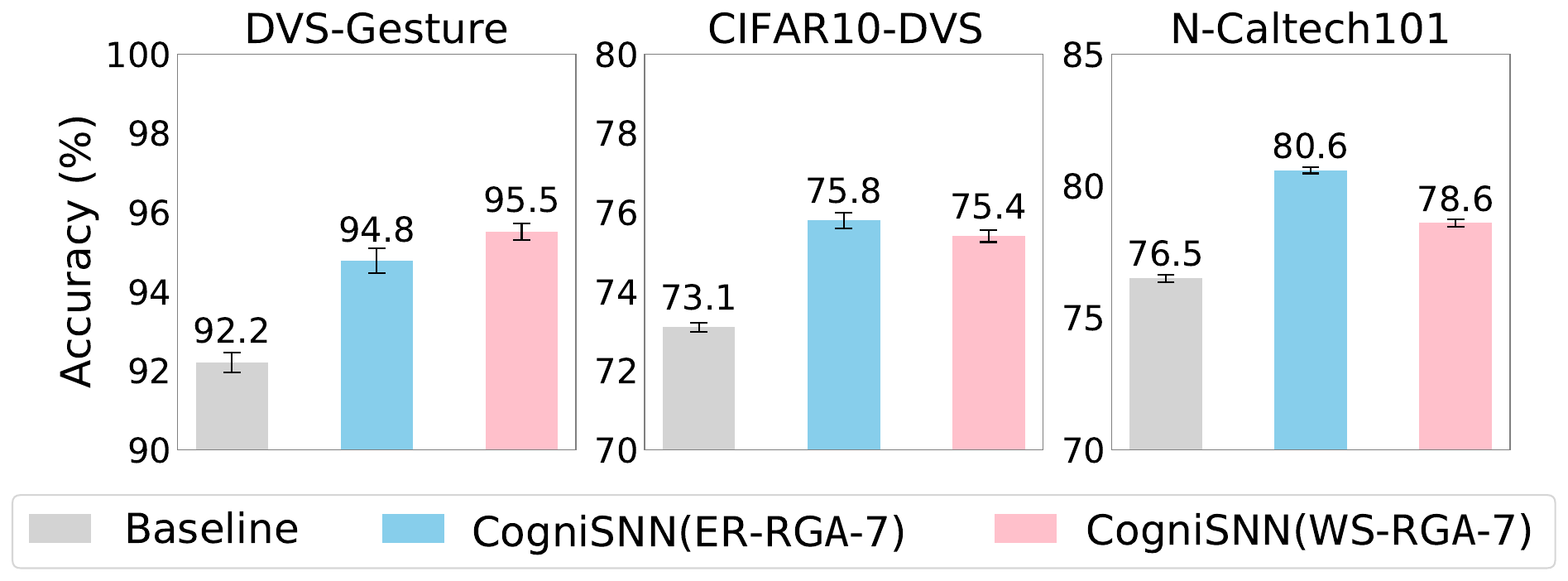}
 \vspace{-10pt}
    \caption{Comparison of results on three neuromorphic datasets between CogniSNN and the chain-like structured SNN.}
    \label{fig:bar}
\end{figure}
Regarding path-plasticity, under the implementation of the vanilla LwF algorithm, we compare CogniSNN with 32 ResNodes against a chain-like structure-based SNN. Table \ref{tab:CL1} and \ref{tab:CL2} show the potential of CogniSNN for continual learning in similar tasks and dissimilar tasks, respectively. In particular, CIFAR100 is treated as source data and CIFAR10 and MNIST are treated as target data. The last column indicates the classification accuracy in the source data, serving as a benchmark for comparison. Our CogniSNN exhibits exceptional classification capabilities for new tasks, achieving a minimal accuracy decline, notably a $48\%$ improvement in the MNIST dataset over the chain-like model and a $18\%$ increase relative to the benchmark. This illustrates the feasibility of RGA-based SNNs in the realm of continual learning.
\begin{table}[htbp]
    \centering
    \caption{Path-plasticity's potential for CogniSNN in similar tasks.}
    \begin{tabular}{lrr|r}
        \toprule
         \multirow{2}{*}{Model}  & CIFAR100  & CIFAR10 & CIFAR100\\
         \cmidrule{2-4}
        & source & target & benchmark    \\
        \midrule
        chain-like      & 46.3(-14.9)   &31.0  & 61.2  \\
        CogniSNN       & \textbf{50.4(-14.0)}    &\textbf{46.1}& 64.4  \\

        \bottomrule
    \end{tabular}
    \label{tab:CL1}
\end{table}%

\begin{table}[htbp]
    \centering
    \caption{Path-plasticity's potential for CogniSNN in dissimilar tasks.}
    \begin{tabular}{lrr|r}
        \toprule
       \multirow{2}{*}{Model}  & CIFAR100  & MNIST & CIFAR100\\
         \cmidrule{2-4}
        & source & target  & benchmark    \\
        \midrule
        chain-like     & 44.6(-16.6)   &34.8 & 61.2    \\
        CogniSNN     & \textbf{47.5(-16.9)}     &\textbf{82.3}& 64.4  \\

        \bottomrule
    \end{tabular}
    \label{tab:CL2}
\end{table}%

\subsection{Insightful Performance Analysis}
\label{subsec: feasibility}

Table \ref{tab:totalTable} presents the results of the experiments conducted with CogniSNN and comparative models. Clearly, when tested with the time step $T = 5$, our methodology outperforms the SSNN model by $4.04\%$ on DVS-Gesture, $2.19\%$ on CIFAR10-DVS and $2.67\%$ on N-Caltech101. 

However, at $T = 8$, it should be noted that the performance on the CIFAR10-DVS is marginally inferior. In this regard, we believe that the vanilla convolution module in ResNode is an important factor for the model performance. In future work, we will explore more advanced modules like self-attention~\cite{2017_vaswani_attention} to replace the convolution here, which we believe can achieve better performance.

With $T = 16$, transformer-based SNNs (e.g., CML) demonstrate strong performance in datasets such as DVS-Gesture. However, our model achieves comparable accuracy while utilizing significantly fewer parameters. For example, when evaluated on DVS-Gesture, our model requires only 1/20 of the parameters needed by the CML model (0.13 million versus 2.57 million).

On Tiny-ImageNet, we also achieved comparable results to the Joint-SNN model in 4 time steps, demonstrating that CogniSNN performs well when facing large static datasets.

These improvements validate that our model not only exhibits significantly reduced latency, but also maintains accuracy even with a compact parameter scale, regardless of configuration in the time steps, demonstrating its superior adaptability and effectiveness across a diverse range of tasks.

\begin{figure}[t]
    \centering
    \includegraphics[width=0.40\textwidth]{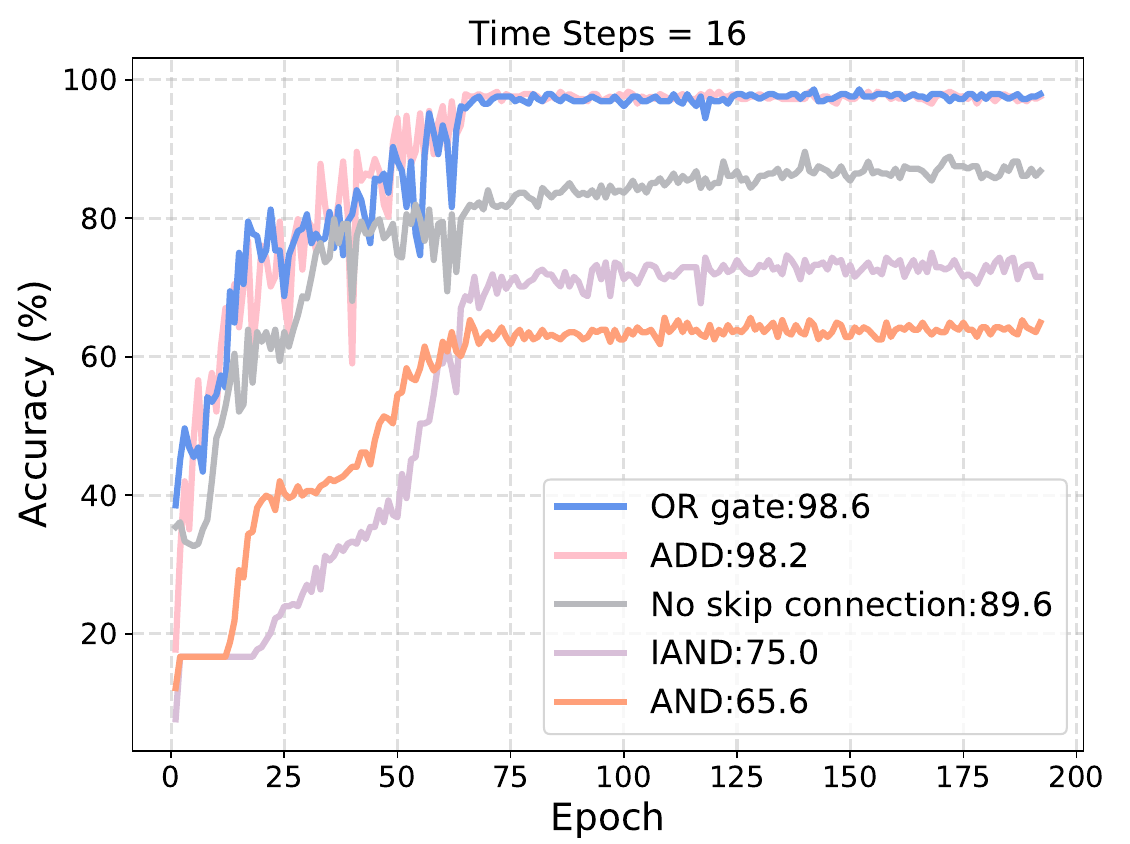}  
    \caption{Comparison of results of OR and other operations on DVS-Gesture with ER-RGA-7 based CogniSNN.}
    \label{fig:ablation1-2}
\end{figure}

\subsection{Depth-Scalability}
The depth-scalability of CogniSNN is heavily based on the usage of skip connections within ResNodes. As described in Section \ref{sec:ResInSNN}, early research is generally based on the following four configurations: 1) without skip connection, 2) ADD (addition), 3) AND, and 4) IAND. We compare our skip connection (OR gate) with the four aforementioned configurations based on the ER-driven CogniSNN. Furthermore, all models with/without configuring different skip connection mechanisms are trained on the DVS-Gesture.

The experimental results are shown in Fig.\ref{fig:ablation1-2}, which involve both the convergence speed and classification accuracy. It is obvious that our proposed model with an OR gate achieves a higher accuracy $9\%$ compared to the variant without using any skip connection mechanism. This discrepancy in the accuracy measurements can be attributed to the gradient vanishing or explosion, which occurs as a result of the deep path involved in information propagation. Intriguingly, the OR gate performs similarly to the ADD operation. However, the OR gate guarantees that the outputs of ResNode are solely spike signals that will not be infinitely amplified.

To explore the maximum scale of CogniSNN, we found that the network using 40 ResNodes with OR gate chain-like connected  does not show significant degradation, achieving 88.3$\%$ (versus 92.2$\%$ with 7 ResNodes) accuracy on the DVS-Gesture (T=5). Although this performance is not comparable to the 152 layers in SEW-ResNet with the ADD operation, it avoids more real-valued computations. Therefore, the maximum scale should be that the maximum path length in CogniSNN should not exceed 40 ResNodes.

\subsection{Path-Plasticity}

We apply the critical path-based LwF with $K = 1$ on the WS-driven CogniSNN configured with 32 ResNodes. Comparable results are observed with the ER-driven CogniSNN and are provided in \textbf{Section 6} of Supplementary Material. Tables \ref{tab:CL3} and \ref{tab:CL4} present the results, maintaining the significance of each column consistent with the corresponding column in Tables \ref{tab:CL1} and \ref{tab:CL2}. The first row presents results from the application of the fine-tuned CogniSNN using vanilla LwF on both tasks. The subsequent two rows display results utilizing the proposed critical path-based LwF, implemented based on paths with the highest and lowest betweenness centrality, respectively. 

\begin{table}[htbp]
    \centering
    \caption{Comparison of accuracy ($\%$) between similar tasks.}
    \begin{tabular}{lrr|r}
        \toprule

        \multirow{2}{*}{Method} &CIFAR100& CIFAR10 & CIFAR100\\
        \cmidrule{2-4}
           & source  & target  &benchmark   \\
        \midrule
        vanilla LwF     &44.9(-19.5)   &48.0 &\multirow{3}{*}{64.4}  \\
        high $C_{B}$ path         & \textbf{48.5(-15.9)}      &\textbf{48.0} & \\
        low $C_{B}$ path       & 48.5(-15.9)       &47.6\\
        \bottomrule
    \end{tabular}
    \label{tab:CL3}
\end{table}
\begin{table}[htbp]
    \centering
    \caption{Comparison of accuracy ($\%$) between dissimilar tasks.}
    \begin{tabular}{lrr|r}
        \toprule
        \multirow{2}{*}{Method}  &CIFAR100& MNIST & CIFAR100\\
        \cmidrule{2-4}
           &source & target  & benchmark    \\
        \midrule
        vanilla LwF      & 42.2(-22.2)   &83.9 &\multirow{3}{*}{64.4}   \\
        high $C_{B}$ path      & 46.6(-17.8)       &85.9\\
        low $C_{B}$ path     & \textbf{47.3(-17.1)}      &\textbf{86.1}\\
        \bottomrule
    \end{tabular}
    \label{tab:CL4}
\end{table}

In Table \ref{tab:CL3}, the performance of CogniSNN with fine-tuned parameters associated with the path in high $C_B$ is marginally more effective than fine-tuned based on low $C_B$, showing an improvement of $0.4\%$. Similarly, the performance of low $C_B$ improves by $0.2\%$ than that of high $C_B$ in Table \ref{tab:CL4}. The improvement is minor due to the simplicity of the classification task in MNIST and CIFAR10. However, this supports our hypothesis regarding the correlation between the selection of the critical path and the similarity of the task (Definition \ref{def: critical paths}). 

Relative to the benchmark, the level of forgetting is reduced. When compared specifically to the LwF-based method, our algorithm demonstrates a $3.6\%$ and $5.1\%$ reduction in forgetting on the previous task. This confirms that the critical path-based LwF can effectively mitigate catastrophic forgetting.

\subsection{Energy Analysis}

The energy consumption advantage of SNNs over ANNs for the same tasks has been demonstrated in many studies~\cite{2025_sun_ilif}. In this paper, we propose a spiking residual operation of the OR Gate to reduce the need for real-valued calculations, as well as a complex SNN architecture where ResNodes are randomly connected. Therefore, it is essential to analyze their energy consumption.

\begin{table}[htbp]
    \centering
    \caption{The total energy consumption for different tasks.}
    \begin{tabular}{lllll}
        \toprule
        &  \multirow{2}{*}T &   CogniSNN & CogniSNN & chain-like SNN \\
        & & (OR) & (ADD) & (OR) \\
        \midrule
         \multirow{3}{*}{DVS-Gesture} & 5 & \textbf{7.46 mJ}  & 8.89 mJ & \underline{\textit{7.75 mJ}}   \\
                                    & 8 & \textbf{7.86 mJ}  & 9.02 mJ & \underline{\textit{7.91 mJ}}  \\
                                   & 16 & \underline{\textit{8.61 mJ}} & 9.72 mJ & \textbf{8.43 mJ}  \\
         \cmidrule{1-5}
        \multirow{2}{*}{CIFAR10-DVS} & 5 & \underline{\textit{133.62 mJ}} & 135.32 mJ & \textbf{132.11 mJ} \\
                                    & 8 &\underline{\textit{118.92 mJ}} & 126.07 mJ & \textbf{115.13 mJ} \\
         \cmidrule{1-5}
        \multirow{2}{*}{N-Caltech101}  & 5 & \underline{\textit{113.69 mJ}} & 115.94 mJ & \textbf{110.88 mJ} \\
                                        & 8 & \underline{\textit{101.53 mJ}} & 114.15 mJ &  \textbf{98.56 mJ} \\
        \bottomrule
    \end{tabular}
    \label{tab:energy}
\end{table}

Table~\ref{tab:energy} presents the energy consumption of a CogniSNN with OR Gate, a CogniSNN with ADD operation and a chain-like SNN with OR Gate (which is chain-like connected by the same number of ResNodes), when testing a single sample from DVS-Gesture, CIFAR10-DVS, N-Caltech101 at different time steps.

It is evident that the OR Gate operation consistently consumes less energy than the ADD operation. This is primarily because the OR Gate maintains the output in spike form, thereby preventing infinite amplification and reducing the firing rate of neurons in deeper layers. Additionally, CogniSNN with RGA sometimes consumes slightly more energy than the chain-like SNN, which is intuitive given that the RGA involves a greater computational load. However, this energy consumption gap is acceptable, as the RGA also provides significant performance gains.

\section{Conclusion}

This paper validates the feasibility of RGA-based SNNs for modeling the inherent depth scalability and path plasticity in the brain and develops a neuroscience-inspired CogniSNN using the proposed ResNode, the OR gate-driven skip connection mechanism, and the critical path-based LwF algorithm. The results are surprising: Compared to traditional SNNs, the CogniSNN is competitive in image recognition and continual learning. This research opens numerous exciting avenues, including leveraging the scalability of CogniSNN for continual learning by automatically adjusting its structure, or optimizing the graph structure to enhance performance.

\newpage
\section{Acknowledgements}
This study is supported by Northeastern University, Shenyang, China (02110022124005) and Guangdong S\&T Program (2021B0909060002).


\bibliography{mybibfile}

@article{2022_antonov_continuous,
  title={Continuous learning of spiking networks trained with local rules},
  author={Antonov, DI and Sviatov, KV and Sukhov, Sergey},
  journal={Neural Networks},
  volume={155},
  pages={512--522},
  year={2022},
  publisher={Elsevier}
}

@inproceedings{2023_proietti_memory,
  title={Memory Replay For Continual Learning With Spiking Neural Networks},
  author={Proietti, Michela and Ragno, Alessio and Capobianco, Roberto},
  booktitle={2023 IEEE 33rd International Workshop on Machine Learning for Signal Processing (MLSP)},
  pages={1--6},
  year={2023},
  organization={IEEE}
}

@article{2023_han_adaptive,
  title={Adaptive Reorganization of Neural Pathways for Continual Learning with Hybrid Spiking Neural Networks},
  author={Han, Bing and Zhao, Feifei and Pan, Wenxuan and Zhao, Zhaoya and Li, Xianqi and Kong, Qingqun and Zeng, Yi},
  journal={arXiv preprint arXiv:2309.09550},
  year={2023}
}

@inproceedings{2024_shen_efficient,
  title={Efficient spiking neural networks with sparse selective activation for continual learning},
  author={Shen, Jiangrong and Ni, Wenyao and Xu, Qi and Tang, Huajin},
  booktitle={Proceedings of the AAAI Conference on Artificial Intelligence},
  volume={38},
  pages={611--619},
  year={2024}
}

@inproceedings{2023_han_enhancing,
  title={Enhancing Efficient Continual Learning with Dynamic Structure Development of Spiking Neural Networks},
  author={Han, Bing and Zhao, Feifei and Zeng, Yi and Pan, Wenxuan and Shen, Guobin},
  booktitle={IJCAI},
  year={2023}
}

@article{2021_hu_spiking,
  title={Spiking deep residual networks},
  author={Hu, Yangfan and Tang, Huajin and Pan, Gang},
  journal={IEEE Transactions on Neural Networks and Learning Systems},
  volume={34},
  number={8},
  pages={5200--5205},
  year={2021},
  publisher={IEEE}
}

@article{2021_fang_deep,
  title={Deep residual learning in spiking neural networks},
  author={Fang, Wei and Yu, Zhaofei and Chen, Yanqi and Huang, Tiejun and Masquelier, Timoth{\'e}e and Tian, Yonghong},
  journal={Advances in Neural Information Processing Systems},
  volume={34},
  pages={21056--21069},
  year={2021}
}

@article{2024_hu_advancing,
  title={Advancing spiking neural networks toward deep residual learning},
  author={Hu, Yifan and Deng, Lei and Wu, Yujie and Yao, Man and Li, Guoqi},
  journal={IEEE Transactions on Neural Networks and Learning Systems},
  year={2024},
  publisher={IEEE}
}

@inproceedings{2022_zhou_spikformer,
  title={Spikformer: When Spiking Neural Network Meets Transformer},
  author={Zhou, Zhaokun and Zhu, Yuesheng and He, Chao and Wang, Yaowei and Yan, Shuicheng and Tian, Yonghong and Yuan, Li},
  booktitle={The Eleventh International Conference on Learning Representations},
  year = {2023}
}

@inproceedings{2019_xie_exploring,
  title={Exploring randomly wired neural networks for image recognition},
  author={Xie, Saining and Kirillov, Alexander and Girshick, Ross and He, Kaiming},
  booktitle={Proceedings of the IEEE/CVF International Conference on Computer Vision},
  pages={1284--1293},
  year={2019}
}

@article{1958_rosenblatt_perceptron,
  title={The perceptron: a probabilistic model for information storage and organization in the brain.},
  author={Rosenblatt, Frank},
  journal={Psychological review},
  year={1958},
  publisher={American Psychological Association}
}

@article{2011_varshney_structural,
  title={Structural properties of the Caenorhabditis elegans neuronal network},
  author={Varshney, Lav R and Chen, Beth L and Paniagua, Eric and Hall, David H and Chklovskii, Dmitri B},
  journal={PLoS computational biology},
  volume={7},
  number={2},
  pages={e1001066},
  year={2011},
  publisher={Public Library of Science San Francisco, USA}
}

@article{2021_barron_neural,
  title={Neural inhibition for continual learning and memory},
  author={Barron, Helen C},
  journal={Current opinion in neurobiology},
  volume={67},
  pages={85--94},
  year={2021},
  publisher={Elsevier}
}

@inproceedings{2016_he_deep,
  title={Deep residual learning for image recognition},
  author={He, Kaiming and Zhang, Xiangyu and Ren, Shaoqing and Sun, Jian},
  booktitle={Proceedings of the IEEE conference on computer vision and pattern recognition},
  pages={770--778},
  year={2016}
}

@inproceedings{2016_he_identity,
  title={Identity mappings in deep residual networks},
  author={He, Kaiming and Zhang, Xiangyu and Ren, Shaoqing and Sun, Jian},
  booktitle={Computer Vision--ECCV 2016: 14th European Conference, Amsterdam, The Netherlands, October 11--14, 2016, Proceedings, Part IV 14},
  pages={630--645},
  year={2016},
  organization={Springer}
}

@article{2023_fang_spikingjelly,
  title={Spikingjelly: An open-source machine learning infrastructure platform for spike-based intelligence},
  author={Fang, Wei and Chen, Yanqi and Ding, Jianhao and Yu, Zhaofei and Masquelier, Timoth{\'e}e and Chen, Ding and Huang, Liwei and Zhou, Huihui and Li, Guoqi and Tian, Yonghong},
  journal={Science Advances},
  volume={9},
  number={40},
  pages={eadi1480},
  year={2023},
  publisher={American Association for the Advancement of Science}
}

@inproceedings{2017_amir_low,
  title={A low power, fully event-based gesture recognition system},
  author={Amir, Arnon and Taba, Brian and Berg, David and Melano, Timothy and McKinstry, Jeffrey and Di Nolfo, Carmelo and Nayak, Tapan and Andreopoulos, Alexander and Garreau, Guillaume and Mendoza, Marcela and others},
  booktitle={Proceedings of the IEEE conference on computer vision and pattern recognition},
  pages={7243--7252},
  year={2017}
}

@article{2017_li_cifar10,
  title={Cifar10-dvs: an event-stream dataset for object classification},
  author={Li, Hongmin and Liu, Hanchao and Ji, Xiangyang and Li, Guoqi and Shi, Luping},
  journal={Frontiers in neuroscience},
  year={2017},
  publisher={Frontiers Media SA}
}

@inproceedings{2024_ding_shrinking,
  title={Shrinking Your TimeStep: Towards Low-Latency Neuromorphic Object Recognition with Spiking Neural Networks},
  author={Ding, Yongqi and Zuo, Lin and Jing, Mengmeng and He, Pei and Xiao, Yongjun},
  booktitle={Proceedings of the AAAI Conference on Artificial Intelligence},
  volume={38},
  pages={11811--11819},
  year={2024}
}

@article{2024_zhang_sglformer,
  title={SGLFormer: Spiking Global-Local-Fusion Transformer with high performance},
  author={Zhang, Han and Zhou, Chenlin and Yu, Liutao and Huang, Liwei and Ma, Zhengyu and Fan, Xiaopeng and Zhou, Huihui and Tian, Yonghong},
  journal={Frontiers in Neuroscience},
  volume={18},
  pages={1371290},
  year={2024},
  publisher={Frontiers Media SA}
}

@article{2022_feng_multi,
  title={Multi-level firing with spiking ds-resnet: Enabling better and deeper directly-trained spiking neural networks},
  author={Feng, Lang and Liu, Qianhui and Tang, Huajin and Ma, De and Pan, Gang},
  journal={arXiv preprint arXiv:2210.06386},
  year={2022}
}

@article{2024_yan_sampling,
  title={Sampling complex topology structures for spiking neural networks},
  author={Yan, Shen and Meng, Qingyan and Xiao, Mingqing and Wang, Yisen and Lin, Zhouchen},
  journal={Neural Networks},
  volume={172},
  pages={106121},
  year={2024},
  publisher={Elsevier}
}

@inproceedings{2017_xie_aggregated,
  title={Aggregated residual transformations for deep neural networks},
  author={Xie, Saining and Girshick, Ross and Doll{\'a}r, Piotr and Tu, Zhuowen and He, Kaiming},
  booktitle={Proceedings of the IEEE conference on computer vision and pattern recognition},
  pages={1492--1500},
  year={2017}
}

@article{2024_zhou_direct,
  title={Direct training high-performance deep spiking neural networks: a review of theories and methods},
  author={Zhou, Chenlin and Zhang, Han and Yu, Liutao and Ye, Yumin and Zhou, Zhaokun and Huang, Liwei and Ma, Zhengyu and Fan, Xiaopeng and Zhou, Huihui and Tian, Yonghong},
  journal={Frontiers in Neuroscience},
  volume={18},
  pages={1383844},
  year={2024},
  publisher={Frontiers Media SA}
}

@article{2017_heusel_gans,
  title={Gans trained by a two time-scale update rule converge to a local nash equilibrium},
  author={Heusel, Martin and Ramsauer, Hubert and Unterthiner, Thomas and Nessler, Bernhard and Hochreiter, Sepp},
  journal={Advances in neural information processing systems},
  volume={30},
  year={2017}
}

@article{2017_li_learning,
  title={Learning without forgetting},
  author={Li, Zhizhong and Hoiem, Derek},
  journal={IEEE transactions on pattern analysis and machine intelligence},
  volume={40},
  number={12},
  pages={2935--2947},
  year={2017},
  publisher={IEEE}
}

@inproceedings{2021_chaudhry_using,
  title={Using hindsight to anchor past knowledge in continual learning},
  author={Chaudhry, Arslan and Gordo, Albert and Dokania, Puneet and Torr, Philip and Lopez-Paz, David},
  booktitle={Proceedings of the AAAI conference on artificial intelligence},
  volume={35},
  pages={6993--7001},
  year={2021}
}

@article{2015_le_tiny,
  title={Tiny imagenet visual recognition challenge},
  author={Le, Yann and Yang, Xuan},
  journal={CS 231N},
  volume={7},
  number={7},
  pages={3},
  year={2015}
}

@inproceedings{2022_kim_neural,
  title={Neural architecture search for spiking neural networks},
  author={Kim, Youngeun and Li, Yuhang and Park, Hyoungseob and Venkatesha, Yeshwanth and Panda, Priyadarshini},
  booktitle={European conference on computer vision},
  pages={36--56},
  year={2022},
  organization={Springer}
}

@article{2022_yang_training,
  title={Training spiking neural networks with local tandem learning},
  author={Yang, Qu and Wu, Jibin and Zhang, Malu and Chua, Yansong and Wang, Xinchao and Li, Haizhou},
  journal={Advances in Neural Information Processing Systems},
  volume={35},
  pages={12662--12676},
  year={2022}
}

@article{2023_guo_joint,
  title={Joint a-snn: Joint training of artificial and spiking neural networks via self-distillation and weight factorization},
  author={Guo, Yufei and Peng, Weihang and Chen, Yuanpei and Zhang, Liwen and Liu, Xiaode and Huang, Xuhui and Ma, Zhe},
  journal={Pattern Recognition},
  volume={142},
  pages={109639},
  year={2023},
  publisher={Elsevier}
}

@article{2023_zhou_enhancing,
  title={Enhancing the performance of transformer-based spiking neural networks by SNN-optimized downsampling with precise gradient backpropagation},
  author={Zhou, Chenlin and Zhang, Han and Zhou, Zhaokun and Yu, Liutao and Ma, Zhengyu and Zhou, Huihui and Fan, Xiaopeng and Tian, Yonghong},
  journal={arXiv preprint arXiv:2305.05954},
  year={2023}
}

@inproceedings{2024_shi_towards,
  title={Towards energy efficient spiking neural networks: An unstructured pruning framework},
  author={Shi, Xinyu and Ding, Jianhao and Hao, Zecheng and Yu, Zhaofei},
  booktitle={The Twelfth International Conference on Learning Representations},
  year={2024}
}

@article{2023_shen_eventmix,
  title={Eventmix: An efficient data augmentation strategy for event-based learning},
  author={Shen, Guobin and Zhao, Dongcheng and Zeng, Yi},
  journal={Information Sciences},
  volume={644},
  pages={119170},
  year={2023},
  publisher={Elsevier}
}

@inproceedings{2024_shen_tim,
  title={TIM: an efficient temporal interaction module for spiking transformer},
  author={Shen, Sicheng and Zhao, Dongcheng and Shen, Guobin and Zeng, Yi},
  booktitle={Proceedings of the Thirty-Third International Joint Conference on Artificial Intelligence},
  pages={3133--3141},
  year={2024}
}

@article{2017_vaswani_attention,
  title={Attention is all you need},
  author={Vaswani, Ashish and Shazeer, Noam and Parmar, Niki and Uszkoreit, Jakob and Jones, Llion and Gomez, Aidan N and Kaiser, {\L}ukasz and Polosukhin, Illia},
  journal={Advances in neural information processing systems},
  volume={30},
  year={2017}
}

@article{2025_shen_context,
  title={Context gating in spiking neural networks: Achieving lifelong learning through integration of local and global plasticity},
  author={Shen, Jiangrong and Ni, Wenyao and Xu, Qi and Pan, Gang and Tang, Huajin},
  journal={Knowledge-Based Systems},
  volume={311},
  pages={112999},
  year={2025},
  publisher={Elsevier}
}

@inproceedings{2023_shen_esl,
  title={Esl-snns: An evolutionary structure learning strategy for spiking neural networks},
  author={Shen, Jiangrong and Xu, Qi and Liu, Jian K and Wang, Yueming and Pan, Gang and Tang, Huajin},
  booktitle={Proceedings of the AAAI Conference on Artificial Intelligence},
  volume={37},
  pages={86--93},
  year={2023}
}

@article{2025_sun_ilif,
  title={ILIF: Temporal Inhibitory Leaky Integrate-and-Fire Neuron for Overactivation in Spiking Neural Networks},
  author={Sun, Kai and Duan, Peibo and Kuhlmann, Levin and Wang, Beilun and Zhang, Bin},
  journal={arXiv preprint arXiv:2505.10371},
  year={2025}
}

@article{2024_xu_adaptive,
  title={Adaptive synaptic scaling in spiking networks for continual learning and enhanced robustness},
  author={Xu, Mingkun and Liu, Faqiang and Hu, Yifan and Li, Hongyi and Wei, Yuanyuan and Zhong, Shuai and Pei, Jing and Deng, Lei},
  journal={IEEE Transactions on Neural Networks and Learning Systems},
  volume={36},
  number={3},
  pages={5151--5165},
  year={2024},
  publisher={IEEE}
}

@article{2025_li_frequency,
  title={Frequency-Aligned Knowledge Distillation for Lightweight Spatiotemporal Forecasting},
  author={Li, Yuqi and Yang, Chuangang and Zeng, Hansheng and Dong, Zeyu and An, Zhulin and Xu, Yongjun and Tian, Yingli and Wu, Hao},
  journal={arXiv:2507.02939},
  year={2025},
}

@inproceedings{2023_xu_unified,
  title={A unified structured framework for agi: Bridging cognition and neuromorphic computing},
  author={Xu, Mingkun and Zheng, Hao and Pei, Jing and Deng, Lei},
  booktitle={International Conference on Artificial General Intelligence},
  pages={345--356},
  year={2023},
  organization={Springer}
}

@inproceedings{2023_xu_exploiting,
  title={Exploiting homeostatic synaptic modulation in spiking neural networks for semi-supervised graph learning},
  author={Xu, Mingkun},
  booktitle={Proceedings of the 32nd ACM international conference on information and knowledge management},
  pages={5193--5195},
  year={2023}
}

@article{2021_xu_exploiting,
  title={Exploiting spiking dynamics with spatial-temporal feature normalization in graph learning},
  author={Xu, Mingkun and Wu, Yujie and Deng, Lei and Liu, Faqiang and Li, Guoqi and Pei, Jing},
  journal={arXiv preprint arXiv:2107.06865},
  year={2021}
}

\end{document}